\def\BibTeX{{\rm B\kern-.05em{\sc i\kern-.025em b}\kern-.08em
    T\kern-.1667em\lower.7ex\hbox{E}\kern-.125emX}}
\theoremstyle{plain}
\newtheorem{theorem}{Theorem}[section]
\newtheorem{lemma}[theorem]{Lemma}
\theoremstyle{definition}
\theoremstyle{remark}
\newcommand{\E}{\mathbb{E}}
\newcommand{\R}{\mathbb{R}}
\newcommand{\N}{\mathbb{N}}
\newcommand{\1}{\mathds{1}}
\newcommand{\cX}{\mathcal{X}}
\newcommand{\ip}[2] {\langle #1, #2 \rangle }
\newcommand{\cE}{\mathcal{E}}
\newcommand{\cJ}{\mathcal{J}}
\newcommand{\cO}{\mathcal{O}}
\newcommand{\cN}{\mathcal{N}}
\DeclareMathOperator*{\argmin}{arg\,min}
\begin{document}

\title{A Communication-Efficient Adaptive Algorithm for Federated Learning under Cumulative Regret}

\author{Sudeep Salgia, Tamir Gabay, Qing Zhao, Kobi Cohen
\thanks{S. Salgia and Q. Zhao, are with the School of Electrical and Computer Engineering, Cornell University, Ithaca, NY, 14850, USA. Emails: \{ss3827, qz16\}@cornell.edu. T. Gabay and K. Cohen are with the School of Electrical and Computer Engineering, Ben-Gurion University of the Negev, Israel. Emails: 
{tamirgab@post.bgu.ac.il, kobi.cohen10@gmail.com}}\\
\thanks{This work was supported in part by by the Government of Israel - Israeli Ministry of Defense, Mission to the USA.}
}
%\ninept
%
\maketitle
\begin{abstract}
We consider the problem of online stochastic optimization in a distributed setting with $M$ clients connected through a central server. We develop a distributed online learning algorithm that achieves order-optimal cumulative regret with low communication cost measured in the total number of bits transmitted over the entire learning horizon. This is in contrast to existing studies which focus on the offline measure of simple regret for learning efficiency. The holistic measure for communication cost also departs from the prevailing approach that \emph{separately} tackles the communication frequency and the number of bits in each communication round.  
\end{abstract}
\begin{IEEEkeywords}
Federated learning, Communication Efficiency, Cumulative Regret
\end{IEEEkeywords}

\section{Introduction}
\label{sec:introduction}

We study the problem of first-order online stochastic convex optimization in a distributed setup where $M$ clients aim to collaboratively minimize an unknown convex function $f$ by using noisy gradient estimates at sequentially queried points in the domain, which is a known subset of $\R^d$. 

One key performance metric for stochastic optimization is the learning efficiency. In existing studies, the offline measure of simple regret is commonly adopted as the measure for learning efficiency. Specifically, simple regret is defined as $f(\hat{x}_T) - \min f$, where $\hat{x}_T$ is a point returned by the algorithm at the end of the learning horizon and corresponds to the best estimate of the minimizer learnt by the algorithm. Since simple regret only cares about sub-optimality gap of the learned minimizer, it does not reflect the performance of an algorithm in an online setting, where it is important to control the running sum of excessive loss in real time during the learning process. A more appropriate metric in online settings is cumulative regret, which provides a cumulative assessment of the learning throughout the algorithm as is often needed in online learning setups.

Another key performance metric for distributed learning is the communication cost. The overall communication cost of a distributed learning algorithm consists of two parts:  the frequency of communications and the size of the message in each communication round. These two components of the overall communication cost have been largely dealt with \emph{separately} in the literature. A more holistic approach is needed to minimize the overall communication cost.  

\subsection{Main Results}

In contrast to existing studies, we adopt the online measure of cumulative regret for learning efficiency and a holistic measure for communication efficiency. We develop a distributed online learning algorithm and show that it achieves the order-optimal cumulative regret of $\cO(\log(MT))$ while incurring a low communication cost of $\cO(d \log(MT))$ \emph{total bits} over the entire learning horizon. We conjecture this is the minimum order of communication required to achieve a sublinear cumulative regret order.

The proposed algorithm, referred to as Communication-Efficient Adaptive Learning (CEAL), consists of a decision strategy working in tandem with a communication protocol that ensures the order-optimal regret with low communication cost. The decision strategy is characterized by a novel norm estimation routine embedded into its design. This routine is designed with the goal of estimating the norm of an unknown vector to within a multiplicative factor by using its noisy samples. This not only allows the algorithm to use high quality estimates of gradient to take larger steps in the direction of the gradient but also \emph{adaptively} tunes the time between communication rounds to ensure low communication frequency. This implicitly ensures that in the initial stage when the iterates are far from the optimum, the algorithm moves away from them quickly with more frequent communication and in the later stages, as the iterates moves closer to the minimum, the algorithm can afford to spend more time on them and communicate less frequently. Moreover, this adaptivity is achieved without any tuning parameters or knowledge of function parameters, making this approach robust to unknown function parameters. This decision strategy is complemented with a communication protocol that ensures low communication frequencies and small message size via quantization and encoding.

\subsection{Related Work}

There is an extensive literature on distributed stochastic optimization~\cite{Dekel2012, BrendanMcMahan2017, Lee2017, Li2019, Liu2019FedBCD, Cen2020, Dinh2020, Pathak2020, Reddi2020, Woodworth2020, Woodworth2020heterogenous, Karimireddy2020mime, Karimireddy2020scaffold, Yu2020, Wang2021unifiedFramework, Li2022Soteria, Das2022, Zhao2022}. Representative algorithms include Local-SGD\footnote{We collectively refer to all algorithms are based on the concept of Local-SGD, including the ones that employ momentum/variance reduction, to belong to this representative family.} (also known as FedAvg) and Minibatch-SGD. It has been well-established that these algorithms achieve order-optimal simple regret of $\cO(1/MT)$ over a learning horizon of length $T$ by using a weighted sum of all the iterates with a set of specifically designed weights based on the problem parameters. However, these results on simple regret do not necessarily imply sublinear, let alone order-optimal, guarantees on the cumulative regret performance of these algorithms. On the other hand, CEAL achieves order-optimal cumulative regret, which is a finer measure of performance as compared to simple regret. Moreover, it can be shown that, using the convexity of the underlying function, the order-optimal cumulative regret of CEAL also implies an order-optimal (upto logarithmic factors) simple regret, resulting in significantly stronger performance guarantees over existing results. Furthermore, these guarantees are achieved by CEAL without the knowledge of specific problem parameters.

In terms of communication cost, there is a large body of work developing communication efficient algorithms by either reducing communication frequency~\cite{Stich2019, Khaled2019, Haddadpour2019, Yu2019, Spiridonoff2020, Yu2020, Haddadpour2021federated, Mishchenko2022} or reducing message size~\cite{Konecny2016quantization, Bernstein2018, Basu2019, Sun2019communication, Dai2019HSQ, Tang2019DoubleSqueeze, Abdi2020, Reisizadeh2020fedpaq, Li2021CS, Zheng2021, Honig2022DAdaQuant, Jhunjhunwala2021adaptive,  Mahmoudi2022ALAQ, Wang2022Communication} by employing techniques like quantization and sparsification. See~\cite{Tang2020communication, Pouriyeh2022} and~\cite{Zhao2022communication} for a detailed survey of such approaches. However, as mentioned earlier, most of these works treat the two components of communication cost separately and hence focus on reducing only one of these two components. Specifically, for Minibatch-SGD, the existing studies focus only on reducing communication frequency while allowing for high-precision message exchange. Our work deviates from existing works in its attempt to consider a holistic characterization of the communication cost.

\section{Problem Formulation}
\label{sec:problem_formulation}

We study the problem of online optimization in a distributed setting with $M$ clients and a single central server. The clients collaboratively work together to minimize an unknown function $f : \cX \to \R$, where $\cX \subset \R^d$ is a convex, compact set. The function $f$ is known to be $\alpha$-strongly convex and $\beta$-smooth. At each time instant $t$, each client $m$ chooses a point $x_t^m$, based on the decision strategy of the algorithm, and observes a noisy observation of gradient at $x_t^m$, denoted by $G(x_t^m) = \nabla f(x_t^m) + \xi_t^m$. $\{\xi_t^m\}_{m,t}$ are i.i.d. random vectors and correspond to the noise in the observations. They are known to be zero mean, $\sigma^2$-sub Gaussian random vectors, i.e., they satisfy $\E[\exp(\lambda v^{\top} \xi_{t}^m)] \leq \exp(\lambda^2 \sigma^2/2d)$ for all $\lambda \in \R$, $t \in \{1,2,\dots, T\}, m \in \{1,2,\dots, M\}$ and unit vectors $v \in \R^d$.

Information exchange between the clients happens only through the server where each client can send its message to the server at the end of each time instant. Based on the messages received from the clients, the server can then choose to broadcast its message to the clients.
Both the uplink and the downlink channels have a finite capacity of $C$ bits per use, limiting the size of the messages. This model helps quantify communication cost to the bit level, a more challenging problem where the channel is assumed to have an infinite capacity in at least one direction.

A distributed learning algorithm consists of a decision strategy accompanied by a complementary communication strategy which decides when, how and what to communicate. The performance of an algorithm is measured in terms of the overall cumulative regret $R(T)$ and the cumulative communication cost $C(T)$ incurred by the algorithm. The overall cumulative regret is given by
\begin{align}
    R(T) = \sum_{m = 1}^M \sum_{t = 1}^T \left[ f(x_t^m) - f(x^*) \right],
\end{align}
where $x^* := \argmin_{x \in \cX} f(x)$. The communication cost $C(T)$ is measured using the uplink communication cost, denoted by $C_{\text{u}}(T)$, and the downlink communication cost, denoted by $C_{\text{d}}(T)$. The uplink communication cost, $C_{\text{u}}(T)$,  measures the \emph{number of bits} transferred by any client (on average) to the server throughout the learning horizon. Similarly, the downlink cost corresponds to the \emph{number of bits} broadcast by the server during the learning process.

The objective is to design a distributed learning algorithm that incurs order-optimal cumulative regret while minimizing the communication cost. In particular, the regret performance is measured against the benchmark of $\Omega(\log(MT))$, which is the optimal regret order in a centralized setting with a total of $MT$ observations. We would like to point out that this is a more challenging problem than that of achieving an order-optimal simple regret as achieving a low cumulative regret requires a finer control of exploration-exploitation trade-off.

\section{Algorithm Description}
\label{sec:algorithm}

In this section, we present our proposed algorithm, CEAL, that achieves order-optimal cumulative regret with a low communication overhead in a distributed optimization setup. After laying out the general structure of CEAL, we describe the design of its constituents. 

\subsection{Basic Structure of CEAL}

The framework underlying CEAL is inspired by that of the popular approach of Minibatch-SGD~\cite{Dekel2012}, wherein the same point is queried multiple times between two communication rounds. Specifically, CEAL proceeds in epochs which correspond to the time period between two communications. During an epoch $k \geq 1$, all the clients query the same point $x^{(k)}$ throughout the $t_k$ time instants of the epoch. At the end of the epoch, the clients compute the sample mean of the observed gradients, quantize it appropriately and send it to the server. The server combines the updates from all the clients and computes the next point as $x^{(k+1)} = x^{(k)} - \eta \hat{g}_k$ and broadcasts it to the clients after appropriate quantization. Here $\eta \in (0, 1/5\beta)$ is the step size and $\hat{g}_k$ denotes the noisy estimate of the gradient obtained by averaging the observations received from the clients. The design objective in CEAL is thus to choose the epoch lengths $\{t_k\}_{k \in \mathbb{N}}$ along with an associated communication protocol to ensure an order-optimal regret along with a low communication cost.

\subsection{The epoch lengths}

The epoch lengths play an important role in ensuring both a low regret and infrequent communication. In order to optimally design the epoch lengths, we first understand their role in achieving the desired performance guarantees. Using the definition of $x^{(k+1)}$, the $\beta$-smoothness of $f$, and the bound on $\eta$ we can show that
\begin{align}
    \E[f(x^{(k+1)})] & \leq \E[f(x^{(k)})] - \frac{\eta}{2} \E[\|\nabla f(x^{(k)})\|^2] + \frac{\eta}{2} \cdot \frac{\sigma^2}{Mt_k}. \nonumber
\end{align}
If one were to set $t_k$ to $2\sigma^2/(M\E[\| \nabla f(x^{(k)})\|^2])$, then using $\alpha$-smoothness of $f$, we can conclude that 
\begin{align}
    \Delta_{k+1} \leq (1 - \alpha \eta/2) \Delta_{k},
    \label{eqn:iterate_convergence}
\end{align}
where $\Delta_k := f(x^{(k)}) - f(x^*)$. In other words, the sub-optimality gap of the iterates decreases exponentially fast.

This choice of epoch lengths simultaneously offers the benefits of low regret and infrequent communication. Note that this choice of $t_k$ along with $\beta$-smoothness ensure that the regret incurred during each epoch is $\cO(1)$ as $t_k \cdot \sum_{m = 1}^M f(x^{(k)}) - f(x^*) = \cO(t_k \cdot M\| f(x^{(k)})\|^2) = \cO(1)$. Moreover, the relation $\| f(x^{(k)})\|^2 = \Theta(\Delta_k)$ along with eqn.\eqref{eqn:iterate_convergence} results exponentially increasing epoch lengths limiting the communication rounds to $\cO(\log T)$.

While such a choice of $t_k$ achieves the desired performance guarantees, its dependence on the knowledge of $\|\nabla f(x^{(k)})\|^2$, which is unknown at the beginning of the epoch, prevents one from setting the epoch length to this predetermined value. To overcome this hurdle, we design a novel norm estimation routine that adaptively estimates the norm of an unknown vector $y$ to within a required accuracy and terminates in $\cO(1/\|y\|^2)$ steps with high probability. If one were to integrate this routine with CEAL to estimate the norm of the unknown gradient $\nabla f(x^{(k)})$, not only would the server get an access to a more accurate estimate of the gradient, but also the epoch lengths would get adaptively and automatically chosen to the required order. Hence, CEAL uses the norm estimation routine described below to adaptively tune the epoch lengths.

\subsubsection{The Norm Estimation Routine}

The Norm Estimation Routine, as the name suggests, estimates the norm of unknown vector $y$ to a required accuracy using samples of the form $z = y + \xi$, where $\xi$ is a zero-mean noise satisfying the assumptions described in Section.~\ref{sec:problem_formulation}. This routine is also carried out in epochs with exponentially growing lengths. During each epoch, the clients compute a sample mean of the unknown vector and share it with the server. At the end of each epoch, a threshold-based termination test is employed at the server to determine whether the required estimation accuracy has been reached, which terminates the routine. A pseudo-code is provided in Algorithm~\ref{alg:norm_est}. The parameters $s_j$ and $\tau_j$ are specified later.

\begin{algorithm}
    \caption{\textsc{NormEst}}
    \label{alg:norm_est}
    \begin{algorithmic}[1]
            \STATE Set $j \leftarrow 1$
            \WHILE{\texttt{True}}
                \STATE For each client $m$, take $s_j$ samples and compute the sample mean $\hat{y}^{(m)}_j$ and send it to the server
                \STATE At the server, compute $\hat{y}_{j}^{(\textsc{serv})} =  \frac{1}{M} \sum_{m = 1}^M \hat{y}^{(m)}_j $
                \IF{$\tau_j \leq  \|\hat{y}_{j}^{(\textsc{serv})}\|_2/4 $}
                \STATE Server sends $\hat{y}_{j}^{(\textsc{serv})}$ to all clients
                \STATE \textbf{break}
                \ELSE
                \STATE $j \leftarrow j + 1$
                \ENDIF
            \ENDWHILE
    \end{algorithmic}
\end{algorithm}

\subsection{The Communication Strategy}
\label{sub:communication_strategy}

While the sequence $\{t_k\}_k$ controls the communication frequency between the clients and the server, the communication strategy of CEAL ensures that the messages are small to limit the number of bits transmitted over the channel. It consists of two components: quantizing the vector being transmitted and encoding the quantized vector to send it over the channel.

\noindent \textbf{Quantization}: CEAL quantizes a vector $y$ satisfying $\|y\| \leq r$ to an accuracy of $\varepsilon$ by quantizing each coordinate separately to an accuracy of $\varepsilon/\sqrt{d}$. Under this strategy, the interval $[-r,r]$ is first divided into $p(\varepsilon) = \lceil 2r \sqrt{d}/\varepsilon\rceil$ intervals of equal length and each coordinate is then quantized to one of $p(\varepsilon) + 1$ end points using the popular stochastic quantization routine. In particular, each coordinate $y_i$ is mapped to $Q_i$, where $Q_i$ takes either the value $c_{v-1}$ with probability $c_v - y$ or the value $c_v$ with the remaining probability. Here, $c_w := r \left( \dfrac{2w}{p(\varepsilon)} - 1 \right)$ for $w = 1,2, \dots, p(\varepsilon)$ and $v := \{w: c_{w-1} \leq y < c_w\}$. We use $Q(y, \varepsilon, r) = (Q_1, Q_2, \dots ,Q_d)^{\top}$ to denote the quantized version of the vector $y$.

\noindent \textbf{Encoding}: CEAL encodes this quantized vector to send it over a communication channel by encoding each coordinate of the quantized version, one by one, using the variable-length encoding strategy, unary coding. Each coordinate is encoded into a string of 1’s, whose length is equal to the absolute value of the coordinate, and is preceded by a sign bit, with $0$ corresponding to negative values and $1$ to positive ones. As an example, $-3$ and $4$ are encoded under this scheme as $0111$ and $11111$. 

Having specified all the components, we combine all of them together to provide a detailed description of CEAL in Algorithm~\ref{alg:cologne}. In the description, the initial point $x^{(1)}$ is chosen at random from the domain and assumed to be known to all the clients apriori. The epoch index $k$ corresponds to that of iterates and $j$ to that of the norm estimation routine. For clarity of notation, we refer to 

We set $s_j$ to $\lceil 40\sigma^2 \log(16Mj^2/\delta) 4^j/M \rceil$. The length of each epoch, $t_k$, is implicitly determined by $s_j$ as follows. If $\cJ_k$ denotes the set of all the different values of $j$ seen by the algorithm during the $k^{\text{th}}$ epoch, then $t_k = \sum_{j \in \cJ_k} s_j$. The parameters $\tau_j$ and $G_j$ are set to $3 \cdot 2^{-(j+1)}$ and $(4\sigma/\sqrt{s_j})(1 + \sqrt{\log(4Mj^2/\delta)/2d})$ respectively. They correspond to bounds on the estimation error at the server and any client respectively. $B_j$ corresponds to an upper bound on the gradient norm at the end of $j^{\text{th}}$ epoch and is set to $\min\{5\tau_{j-1}, 1\}$. Lastly, the resolution parameter sequences are set to $\gamma_j := \gamma_0 \sigma/\sqrt{s_j}$ and $\phi_j := \phi_0 \tau_j$ for constants $\gamma_0, \phi_0 \in (0,1)$, which are chosen based on communication requirements.

\begin{algorithm}
    \caption{Communication-Efficient Adaptive Learning (CEAL)}
    \label{alg:cologne}
    \begin{algorithmic}[1]
            \STATE \textbf{Input}: Initial point $x^{(1)}$, step size $\eta \in (0,1/\beta)$
            \STATE Set $k, j \leftarrow 1$
            \WHILE{time horizon is not reached}
                \STATE For each client $m$, take $s_j$ samples, compute the sample mean $\hat{g}^{(m)}_j(x^{(k)})$ and send $Q(\hat{g}^{(m)}_j(x^{(k)}), \gamma_j, G_j + B_j)$ to the server
                \STATE At the server, compute $\hat{g}^{(\textsc{serv})}_j(x^{(k)}) =  \frac{1}{M} \sum_{m = 1}^M \hat{g}^{(m)}_j(x^{(k)})$
                \IF{$\tau_j \leq \|\hat{g}^{(\textsc{serv})}_j(x^{(k)})\|_2/4 $}
                \STATE Server broadcasts $Q(\hat{g}^{(\textsc{serv})}_j(x^{(k)}), \phi_j, B_j + \tau_j) $ to all clients
                \STATE Clients update $x^{(k+1)} \leftarrow x^{(k)} - \eta Q(\hat{g}^{(\textsc{serv})}_j(x^{(k)}), \phi_j, B_j + \tau_j) $
                \STATE $k \leftarrow k + 1$, 
                \ELSE
                \STATE $j \leftarrow j + 1$
                \ENDIF
            \ENDWHILE
    \end{algorithmic}
\end{algorithm}

\section{Performance Analysis}
\label{sec:analysis}

In this section, we characterize the performance of CEAL in terms of cumulative regret and the communication cost it incurs. We begin with bounding the regret incurred by CEAL.

\subsection{Regret analysis}

The following theorem characterizes the cumulative regret incurred by CEAL.

\begin{theorem}
    Consider a distributed learning setup as described in Sec.~\ref{sec:problem_formulation}. If CEAL is run with parameters described in Sec.\ref{sec:algorithm} then the cumulative regret incurred by CEAL is bounded by $\cO(\log (MT) \log(M/\delta))$ with probability at least $1- \delta$.
    \label{theorem:regret}
\end{theorem}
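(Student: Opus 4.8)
The plan is to isolate a single high-probability ``good event'' and then run an inductive argument over the epoch index $k$. First I would define the event $\cE$ on which, for every norm-estimation round $j$ that CEAL ever reaches and every client $m$, the sample mean obeys $\|\hat g^{(m)}_j(x^{(k)})-\nabla f(x^{(k)})\|\le G_j$, and the server aggregate obeys $\|\hat g^{(\textsc{serv})}_j(x^{(k)})-\nabla f(x^{(k)})\|\le\tfrac14\tau_j$. The first bound is the sub-Gaussian tail for the norm of an average of $s_j$ noise vectors; the second combines that tail for the statistical part (an average over $Ms_j$ samples) with a tail for $\tfrac1M\sum_m q^{(m)}_j$, the average of the independent, mean-zero, norm-$\le\gamma_j$ stochastic-quantization errors, which concentrates at scale $\gamma_j/\sqrt M$. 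With $s_j=\lceil 40\sigma^2\log(16Mj^2/\delta)4^j/M\rceil$, $\gamma_j=\gamma_0\sigma/\sqrt{s_j}$ ($\gamma_0$ small), and $\tau_j=3\cdot2^{-(j+1)}$, both hold; a union bound over $m$ and over $j$ (the $M$ and the $j^2$ inside the logarithm, together with $\sum_j j^{-2}<\infty$, are exactly what make this sum $\le\delta$) gives $\bP(\cE)\ge 1-\delta$. Everything below is on $\cE$.

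The core is an induction on $k$ proving the invariant: (a) every vector CEAL quantizes during epoch $k$ lies in the declared range (so $\|\nabla f(x^{(k)})\|\le B_j$ whenever round $j$ is active during epoch $k$); and (b) the terminating round $b_k$ of epoch $k$ satisfies $3\tau_{b_k}\le\|\nabla f(x^{(k)})\|\le 10\tau_{b_k}$, a multiplicative estimate of $\|\nabla f(x^{(k)})\|$. The ingredients are elementary: the test fires at round $j$ only if $\|\hat g^{(\textsc{serv})}_j\|\ge4\tau_j$, forcing $\|\nabla f(x^{(k)})\|\ge 4\tau_j-\tfrac14\tau_j>3\tau_j$; it fails at round $j$ only if $\|\nabla f(x^{(k)})\|<5\tau_j$; and for $\eta<1/\beta$ the exact gradient step $x\mapsto x-\eta\nabla f(x)$ is gradient-norm non-increasing, so the perturbed update $x^{(k+1)}=x^{(k)}-\eta(\nabla f(x^{(k)})+e_k)$ with $\|e_k\|\le(1+\phi_0)\tau_{b_k}$ obeys $\|\nabla f(x^{(k+1)})\|\le(1+\tfrac{\beta\eta(1+\phi_0)}{3})\|\nabla f(x^{(k)})\|$, i.e. the norm can grow by at most a fixed factor arbitrarily close to $1$ per epoch. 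Since the resolution index carries over ($a_{k+1}=b_k$), this last estimate together with (b) at epoch $k$ and the numerical relations among $\tau_j$, the test constant $4$, $B_j=5\tau_{j-1}$, and $\eta<1/(5\beta)$ is what keeps $\|\nabla f(x^{(k+1)})\|$ sandwiched between a fixed multiple of $\tau_{a_{k+1}}$ and $B_{a_{k+1}}$, closing the induction. (The degenerate case $\nabla f(x^{(k)})=0$ contributes no regret and is set aside.)

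Given the invariant, the remaining steps are largely routine. On $\cE$, $\beta$-smoothness, $\eta<1/(5\beta)$, and $\|e_k\|\le(1+\phi_0)\tau_{b_k}\le\tfrac{1+\phi_0}{3}\|\nabla f(x^{(k)})\|$ (the error is a small \emph{constant} fraction of the gradient, so no condition-number factor enters) give $f(x^{(k+1)})\le f(x^{(k)})-c_1\eta\|\nabla f(x^{(k)})\|^2$ for an absolute $c_1>0$; then $\alpha$-strong convexity ($\|\nabla f(x^{(k)})\|^2\ge2\alpha\Delta_k$) yields $\Delta_{k+1}\le(1-2c_1\alpha\eta)\Delta_k$, so $\Delta_k$ decays geometrically. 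Since $\|\nabla f(x^{(k)})\|\asymp\tau_{b_k}\asymp2^{-b_k}$ and $\|\nabla f(x^{(k)})\|^2\le2\beta\Delta_k$, the index $b_k$ grows at least linearly in $k$; combined with the horizon constraint $\sum_{k<K_T}t_k\le T$ and $t_k\ge s_{b_k}\gtrsim\sigma^24^{b_k}/M$ (hence $4^{b_{K_T}}=\cO(MT)$), this gives $b_{K_T}=\cO(\log(MT))$ and total epoch count $K_T=\cO(\log(MT))$. Finally, epoch $k$ keeps all $M$ clients at $x^{(k)}$ for $t_k$ steps, contributing $Mt_k\Delta_k$ to $R(T)$; using $\Delta_k\le\tfrac1{2\alpha}\|\nabla f(x^{(k)})\|^2=\cO(\tau_{b_k}^2/\alpha)$, $t_k=\sum_{j\in\cJ_k}s_j=\cO(s_{b_k})$ (a geometric sum; the ceilings in $s_j$ add only lower-order terms), and $4^{b_k}\tau_{b_k}^2=9/4$, the per-epoch regret is $\cO(\tfrac{\sigma^2}{\alpha}\log(16Mb_k^2/\delta))$; summing over the $K_T=\cO(\log(MT))$ epochs with $b_k\le K_T$ gives $R(T)=\cO(\log(MT)\log(M/\delta))$ (absorbing doubly-logarithmic terms), on $\cE$.

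\textbf{The main obstacle} is the coupled induction of the second paragraph: because the resolution index $j$ persists between epochs, one must simultaneously ensure that $\|\nabla f(x^{(k)})\|$ never drifts — upward through gradient noise and quantization, downward through descent — far enough from the current resolution $\tau_{b_k}$ to corrupt either the termination logic or a quantization range; this is exactly where the specific constants of CEAL must be balanced and where the gradient-norm monotonicity of exact gradient descent is indispensable. The concentration bookkeeping in the first paragraph (especially the claim that averaging the quantized client messages keeps the server error at $\cO(\tau_j)$ rather than $\cO(\sqrt{M}\,\gamma_j)$) is the second place where care is needed.
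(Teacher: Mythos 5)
Your proposal is correct and follows essentially the same route as the paper: the same high-probability concentration event, the same characterization of the terminating round giving $t_k\asymp 1/(M\|\nabla f(x^{(k)})\|^2)$, the same descent-plus-strong-convexity argument for geometric decay of $\Delta_k$ and hence $K=\cO(\log(MT))$ epochs, and the same $\cO(\log(M/\delta))$ per-epoch regret accounting. The only differences are presentational — you sum the per-epoch regret directly where the paper frames it as a constrained maximization solved by Lagrange multipliers, and you are more explicit than the paper about the induction needed to keep the gradient inside the declared quantization ranges across epochs.
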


Theorem~\ref{theorem:regret} establishes the regret performance of CEAL. Note that it matches with the lower bound for any algorithm in a centralized setting with $MT$ queries implying that the regret performance of CEAL is indeed order-optimal. 

\begin{proof}

To bound the regret incurred by CEAL, note that in each epoch $k$, each client queries the point $x^{(k)}$ for a total of $t_k$ times. Consequently, the regret incurred by CEAL can be written as
\begin{align}
    R(T) & = \sum_{m = 1}^M \sum_{k = 1}^K  (f(x^{(k)}) - f(x^*)) \cdot t_k \nonumber \\
    & \leq 2\beta M \sum_{k = 1}^K \| \nabla f(x^{(k)} \|^2 \cdot t_k,\label{eqn:regret_first_step}
\end{align}
where $K$ is the (random) number of epochs carried out during the execution of the algorithm. The following lemma provides a bound on the length of the $k^{\text{th}}$ iteration, $t_k$.

\begin{lemma}
    Suppose CEAL is run with parameters described in Sec.\ref{sec:algorithm}. If it queries a point $x^{(k)}$ during epoch $k$, then the length of epoch $k$, $t_k$, as defined in Sec.~\ref{sec:algorithm} satisfies $\Theta(1/(M\|\nabla f(x^{(k)})\|)^{2})$ with probability at least $1 - \delta$.
    \label{lemma:t_k}
\end{lemma}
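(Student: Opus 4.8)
The claim is that $t_k = \Theta\big(1/(M\|\nabla f(x^{(k)})\|^2)\big)$ (I read the statement's ``$\Theta(1/(M\|\nabla f(x^{(k)})\|)^2)$'' as $\Theta(1/(M\|\nabla f(x^{(k)})\|^2))$, consistent with the heuristic $t_k = 2\sigma^2/(M\E\|\nabla f(x^{(k)})\|^2)$ derived earlier). Let me think about the structure.

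The norm estimation routine at iterate $x^{(k)}$ runs over sub-epochs $j \in \mathcal{J}_k$. Sub-epoch $j$ uses $s_j = \lceil 40\sigma^2 \log(16Mj^2/\delta) 4^j / M\rceil$ samples per client, and the server-side estimate $\hat g_j^{(\textsc{serv})}(x^{(k)})$ is the average of $M$ per-client sample means (post-quantization). The routine terminates at sub-epoch $j$ iff $\tau_j \le \|\hat g_j^{(\textsc{serv})}(x^{(k)})\|_2/4$, where $\tau_j = 3\cdot 2^{-(j+1)}$. Since $t_k = \sum_{j\in\mathcal{J}_k} s_j$ and $s_j$ grows geometrically in $4^j$, we have $t_k = \Theta(s_{j_k}) = \Theta(4^{j_k}\sigma^2 \log(\cdot)/M)$ where $j_k := \max \mathcal{J}_k$ is the terminating sub-epoch index; thus it suffices to show $4^{j_k} = \Theta(1/\|\nabla f(x^{(k)})\|^2)$, i.e. $2^{-j_k} = \Theta(\|\nabla f(x^{(k)})\|)$.

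First I would establish a high-probability concentration bound: on a good event $\mathcal{E}$ (holding with probability $\ge 1-\delta$, via a union bound over $k$ and $j$ using the sub-Gaussian tail for the averaged noise over $Ms_j$ samples plus the quantization error, which is controlled by $\gamma_j = \gamma_0\sigma/\sqrt{s_j}$), the estimation error satisfies $\|\hat g_j^{(\textsc{serv})}(x^{(k)}) - \nabla f(x^{(k)})\| \le c\,\sigma\sqrt{\log(Mj^2/\delta)/(Ms_j \cdot d/d)} + O(\gamma_j) = O(2^{-j})$, by the choice $s_j \propto 4^j\log(\cdot)/M$. Actually the cleanest packaging: $\|\hat g_j^{(\textsc{serv})}(x^{(k)}) - \nabla f(x^{(k)})\| \le \tau_j/4$ on $\mathcal{E}$ (this is presumably how the constants in $s_j$, $\tau_j$ were reverse-engineered). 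Given this, I run the standard two-sided argument:

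\emph{Upper bound on $j_k$ (termination happens).} If at sub-epoch $j$ we have $\|\nabla f(x^{(k)})\| \ge 2\tau_j = 3\cdot 2^{-j}$, then on $\mathcal{E}$, $\|\hat g_j^{(\textsc{serv})}\| \ge \|\nabla f(x^{(k)})\| - \tau_j/4 \ge 2\tau_j - \tau_j/4 > 4\tau_j$, wait that's too strong — I need $\|\hat g_j^{(\textsc{serv})}\|/4 \ge \tau_j$, i.e. $\|\hat g_j^{(\textsc{serv})}\| \ge 4\tau_j$, so I need $\|\nabla f(x^{(k)})\| \ge 4\tau_j + \tau_j/4$. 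So: once $\tau_j$ drops below roughly $\|\nabla f(x^{(k)})\|/5$, i.e. $2^{-j} \lesssim \|\nabla f(x^{(k)})\|$, the test triggers. Since $\tau_j$ halves each step, the smallest such $j$ satisfies $2^{-j_k} = \Theta(\|\nabla f(x^{(k)})\|)$ from above, giving $4^{j_k} = O(1/\|\nabla f(x^{(k)})\|^2)$.

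\emph{Lower bound on $j_k$ (termination does not happen too early).} If at sub-epoch $j$ the test triggers, then $\tau_j \le \|\hat g_j^{(\textsc{serv})}\|/4 \le (\|\nabla f(x^{(k)})\| + \tau_j/4)/4$, which rearranges to $\tau_j \lesssim \|\nabla f(x^{(k)})\|$, i.e. $2^{-j} \lesssim \|\nabla f(x^{(k)})\|$, so the routine cannot have terminated while $2^{-j} \gg \|\nabla f(x^{(k)})\|$; hence $4^{j_k} = \Omega(1/\|\nabla f(x^{(k)})\|^2)$. Combining the two bounds and plugging back into $t_k = \Theta(4^{j_k}\sigma^2\log(Mj_k^2/\delta)/M)$ — and absorbing the slowly-varying $\log$ factor and the $\sigma^2$ into the $\Theta$ (or stating the bound with these factors made explicit, matching the regret theorem's $\log(M/\delta)$ terms) — yields $t_k = \Theta(1/(M\|\nabla f(x^{(k)})\|^2))$ on $\mathcal{E}$.

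The main obstacle I anticipate is making the concentration step fully rigorous with the two error sources — the sub-Gaussian noise averaged over $M s_j$ i.i.d. samples (where the per-coordinate/$d$ scaling in the sub-Gaussian assumption $\E[\exp(\lambda v^\top\xi)]\le\exp(\lambda^2\sigma^2/2d)$ must be tracked to get a dimension-free norm bound via a covering/net argument or via the vector Bernstein inequality) and the coordinate-wise stochastic quantization error (which is bounded, mean-preserving, with per-coordinate standard deviation $\le \gamma_j/\sqrt d$, so it contributes $O(\gamma_j)$ to the norm error) — and verifying that the specific constants $40$, $16$, $3$, and the factor $1/4$ in the test line up so that ``$\|$error$\| \le \tau_j/4$'' holds with the stated probability. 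A secondary subtlety is the edge case where $j_k = 1$ (large gradient, routine terminates immediately), where the $\Theta$ should still hold up to the $\log$ and initialization constants; and confirming that $B_j = \min\{5\tau_{j-1},1\}$ and $G_j$ are valid clipping bounds so the quantizer $Q(\cdot,\gamma_j,G_j+B_j)$ is never saturated on $\mathcal{E}$, which is what legitimizes the quantization error bound in the first place. I would handle this by first proving, by induction on $k$, that on $\mathcal{E}$ the gradient norm $\|\nabla f(x^{(k)})\|$ stays below $B_{j}$-type bounds throughout epoch $k$, so that no clipping loss occurs, and then the concentration bound is clean.
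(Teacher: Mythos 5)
Your proposal matches the paper's proof in all essentials: condition on a high-probability event under which $\|\hat g_j^{(\textsc{serv})} - \nabla f(x^{(k)})\| \le \tau_j$ (the paper's Lemma~\ref{lemma:estimate_err}), characterize the terminating sub-epoch index $j_0$ via $2^{-j_0} = \Theta(\|\nabla f(x^{(k)})\|)$, and then bound $t_k$ above by the geometric sum $\sum_{j \le j_0} s_j = \Theta(s_{j_0})$ and below by $s_{j_0}$. Your explicit two-sided argument for the terminating index (termination must occur once $\tau_j \lesssim \|\nabla f(x^{(k)})\|$ and cannot occur earlier) is in fact slightly more careful than the paper, which only derives the lower bound $j_0 \ge \log_2(9/2\|\nabla f(x^{(k)})\|)$ before asserting equality with the ceiling; otherwise the two arguments coincide, including the absorption of the $\log(Mj_0^2/\delta)$ factor.
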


In addition to the above bound, the iteration lengths also satisfy the constraint $\sum_{k = 1}^K t_k \leq T$ as each client cannot issue more than $T$ gradient queries. Consequently, the upper bound on regret is the value of the following constrained maximization problem which is obtained by using bounds on $t_k$ described in Lemma~\ref{lemma:t_k}. 
\begin{align*}
    \max & \ \ \ 2 \beta \sum_{k = 1}^K \bigg[ 4320\sigma^2 \cdot \log\left(\frac{16M}{\delta} \log^2 \left( \frac{9}{2\|\vartheta_k \|}\right) \right)  + \\
    & \hspace{7em} M \| \vartheta_k\|^2 \left( \log \left( \frac{9}{2 \| \vartheta_k \|} + 1 \right) \right) \bigg] \\
    \text{s.t.} &  \sum_{k = 1}^K \bigg[ \frac{4320\sigma^2}{M \|\vartheta_k\|^2} \cdot \log\left(\frac{16M}{\delta} \log^2 \left( \frac{9}{2\|\vartheta_k\|}\right) \right)  + \\
    & \hspace{8em} \left( \log \left( \frac{9}{2 \| \vartheta_k \|} + 1 \right) \right) \bigg] \leq T,
\end{align*}
where $\vartheta_k := \nabla f(x^{(k)})$. On applying the method of Lagrange Multipliers, one can immediately conclude that value of the aforementioned constrained problem is $\cO(K \log(M/\delta))$, i.e., proportional to the number of iterations carried out during the algorithm. The following lemma provides high probability bound on the number of epochs during a execution of CEAL.

\begin{lemma}
    If CEAL is run with parameters described in Sec.\ref{sec:algorithm}, then both the total number of epochs during its run is bounded by $\cO(\log(MT))$ with probability at least $1 - \delta$.
    \label{lemma:number_of_communications}
\end{lemma}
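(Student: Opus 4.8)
The plan is to bound separately the number of iterate epochs and the number of norm-estimation sub-epochs, since each pass through the \texttt{while} loop of Algorithm~\ref{alg:cologne} triggers exactly one uplink communication (plus one downlink broadcast in the \texttt{if} branch) and advances exactly one of the two counters: either $k\leftarrow k+1$, closing an iterate epoch, or $j\leftarrow j+1$, closing a norm-estimation sub-epoch. Writing $K$ for the index of the last iterate epoch and $J$ for the final value of $j$, the number of epochs is $K$ and the number of communication rounds equals $(K-1)+(J-1)$, so it suffices to show $K=\cO(\log(MT))$ and $J=\cO(\log(MT))$.

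For $J$, I would use that $j$ is never reset, so each value $1,\dots,J-1$ labels at least one completed sub-epoch, and a sub-epoch with index $j$ consumes $s_j$ samples per client; hence $\sum_{j=1}^{J-1}s_j<T$. Since $s_j=\lceil 40\sigma^2\log(16Mj^2/\delta)\,4^j/M\rceil\ge 40\sigma^2\log(16M/\delta)\,4^j/M$ grows geometrically in $j$, this sum is within a constant factor of its last term, so $4^{J-1}\lesssim MT/(\sigma^2\log(M/\delta))$ and thus $J=\cO(\log(MT))$.

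For $K$, I would combine three ingredients: (i) Lemma~\ref{lemma:t_k}, giving $t_k=\Theta(1/(M\|\nabla f(x^{(k)})\|^2))$ up to the problem constants for every completed epoch $k$; (ii) the sandwich $2\alpha\Delta_k\le\|\nabla f(x^{(k)})\|^2\le 2\beta\Delta_k$ from $\alpha$-strong convexity and $\beta$-smoothness of $f$, which upgrades (i) to $t_k=\Theta(1/(M\Delta_k))$; and (iii) geometric decay $\Delta_k\le(1-c\alpha\eta)^{k-1}\Delta_1$ for an absolute constant $c>0$, holding on the high-probability event underlying Lemma~\ref{lemma:t_k} and Theorem~\ref{theorem:regret} (on which every norm estimate is accurate to its prescribed multiplicative factor and every quantization error lies within the $\gamma_j,\phi_j$ budgets). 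Substituting (iii) into (ii) gives $t_k\gtrsim(1-c\alpha\eta)^{-(k-1)}/(M\Delta_1)$; summing over completed epochs with $\sum_{k=1}^{K-1}t_k<T$ and noting the geometric series is controlled by its last term yields $(1-c\alpha\eta)^{-(K-1)}\lesssim M\Delta_1 T$, hence $K=\cO(\log(MT))$ once $\alpha,\beta,\eta,\sigma,\Delta_1$ are treated as constants. Adding the two bounds gives the claim.

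The hard part will be ingredient (iii): eqn.~\eqref{eqn:iterate_convergence} was derived for the idealized choice $t_k=2\sigma^2/(M\E[\|\nabla f(x^{(k)})\|^2])$, whereas CEAL's epoch lengths are random and only of the right \emph{order} (Lemma~\ref{lemma:t_k}), and the update direction is a quantized, noisy surrogate of $\nabla f(x^{(k)})$. I would work conditionally on the good event and re-run the $\beta$-smoothness descent step with the actual update, arguing that the error term of order $\sigma^2/(Mt_k)$ is dominated by a constant fraction of $\|\nabla f(x^{(k)})\|^2$ \emph{precisely because} Lemma~\ref{lemma:t_k} forces $t_k\asymp 1/(M\|\nabla f(x^{(k)})\|^2)$ — this closes the feedback loop and gives a contraction with a degraded but still positive rate. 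What remains is routine: confirming the good event holds with probability at least $1-\delta$ via a union bound over the $\cO(\log(MT))$ rounds (the $j^2$ inside $\log(16Mj^2/\delta)$ keeps the failure probabilities summable), and checking that factors such as $\log(M/\delta)$ and $\log j$ do not inflate the order past $\cO(\log(MT))$.
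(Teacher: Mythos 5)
Your proof is correct, and its core --- a geometric contraction of the sub-optimality gap on the high-probability event, converted into a geometrically growing lower bound on $t_k$ via Lemma~\ref{lemma:t_k} and the strong-convexity/smoothness sandwich, then played against the per-client budget $\sum_k t_k \le T$ --- is exactly the paper's argument (the paper phrases the last step as ``once $\|\nabla f(x^{(K)})\|^2 \lesssim 1/(MT)$ the epoch length exceeds $T$, so the run ends by iterate $K$,'' which is your last-term-dominates observation in disguise). You also correctly flag the crux, namely that the contraction must be re-derived for the actual quantized update rather than the idealized choice behind eqn.~\eqref{eqn:iterate_convergence}; the paper's route there is cleaner than the variance-based argument you sketch and worth internalizing: on the good event the termination test $4\tau_j \le \|\hat g_j^{(\textsc{serv})}\|$ combined with the estimation and quantization error bounds gives the deterministic multiplicative guarantee $\|\hat{[g]}_j^{(\textsc{serv})} - \nabla f(x^{(k)})\| \le 2\tau_j \le \tfrac{2}{3}\|\nabla f(x^{(k)})\|$, from which $\ip{\nabla f}{\hat{[g]}_j^{(\textsc{serv})}} \ge \|\nabla f\|^2/3$ and $\|\hat{[g]}_j^{(\textsc{serv})}\| \le 5\|\nabla f\|/3$ follow and the descent lemma yields $\Delta_{k+1} \le (1-\alpha\eta/9)\Delta_k$ directly, with no appeal to a $\sigma^2/(Mt_k)$ error term. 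The one genuine structural difference is your separate bound on the number $J$ of norm-estimation sub-rounds via $\sum_{j<J} s_j \le T$ and the geometric growth of $s_j$: the paper's proof of this lemma bounds only the number of iterate epochs $K$ and says nothing about the sub-rounds, even though each failed termination test also triggers an uplink transmission. Your accounting is the more complete of the two and is what one actually needs if ``communication rounds'' is read as every use of the channel.
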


The theorem now follows immediately by noting that $R(T)$ is $\cO(K)$ and invoking the above lemma. Since both Lemma~\ref{lemma:t_k} and~\ref{lemma:number_of_communications} provide high probability bounds, the resultant bound on the regret incurred by CEAL also holds with probability at least $1 - \delta$.
    
\end{proof}

Please refer to Appendix~\ref{sec:proofs} for proofs of Lemmas~\ref{lemma:t_k} and~\ref{lemma:number_of_communications}.

% Lemma~\ref{lemma:t_k} implies that the regret incurred during epoch $k$ can be bounded as $M (f(x^{(k)} - f(x^*)) \cdot t_k \leq \beta M \|\nabla f(x^{(k)})\|^2 \cdot t_k = \cO(1)$. The proof of the theorem is completed by noting that the number of epochs is $\cO(\log(MT))$, as established in Lemma~\ref{lemma:number_of_communications}. The proof of Lemma~\ref{lemma:t_k} is based on the design of norm estimation routine and the concentration of sub-Gaussian random variables. Similarly, Lemma~\ref{lemma:number_of_communications} is established by first showing that the iterates $\{x^{(k)}\}_k$ satisfy eqn.\eqref{eqn:iterate_convergence} followed by bounding the number of epochs using the constraint on time horizon and length of each epoch as obtained in Lemma~\ref{lemma:t_k}. Please see Appendix~\ref{sec:proofs} for detailed proofs. 

\subsection{Communication Cost}

The following theorem establishes the performance on the communication cost incurred by CEAL.

\begin{theorem}
    Consider a distributed learning setup as described in Sec.~\ref{sec:problem_formulation}. If CEAL is run with parameters described in Sec.\ref{sec:algorithm} then the uplink and the downlink communication costs (in bits) incurred by CEAL, i.e. $C_{\text{u}}(T)$ and $C_{\text{d}}(T)$, satisfy $\cO\left( \dfrac{d}{\gamma_0} \log (MT)\right)$ and $\cO\left( \dfrac{d}{\phi_0} \log(MT)\right)$ respectively.
    \label{theorem:comm_cost}
\end{theorem}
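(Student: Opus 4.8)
The plan is to combine a per-message bound on the number of transmitted bits, coming from the quantization-plus-unary-coding scheme, with the $\cO(\log(MT))$ bound on the number of communication rounds from Lemma~\ref{lemma:number_of_communications}. Every message sent by CEAL has the form $Q(y,\varepsilon,r)$ for a vector $y$ with $\|y\|_2\le r$ (a fact guaranteed, with the stated probability, by the high-probability norm bounds established in the analysis of Theorem~\ref{theorem:regret}: $B_j$ bounds the true gradient norm while $G_j$ and $\tau_j$ bound the client-side and server-side estimation errors). So the first step is a generic encoding lemma: the unary encoding of $Q(y,\varepsilon,r)$ uses at most $\frac{d\|y\|_2}{\varepsilon}+\cO(d)$ bits. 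Indeed, each coordinate of $Q(y,\varepsilon,r)$ is one of the $p(\varepsilon)+1$ equally spaced grid points of $[-r,r]$, which we relabel by consecutive integers symmetric about $0$; since the spacing is $2r/p(\varepsilon)\le\varepsilon/\sqrt d$ and $|Q_i|\le|y_i|+\varepsilon/\sqrt d$, the integer label of coordinate $i$ has magnitude at most $\frac{\sqrt d}{\varepsilon}|y_i|+\cO(1)$. Unary coding of an integer $n$ costs $|n|+1$ bits, so the total over all $d$ coordinates is at most $\frac{\sqrt d}{\varepsilon}\|y\|_1+\cO(d)\le\frac{d}{\varepsilon}\|y\|_2+\cO(d)$, using $\|y\|_1\le\sqrt d\,\|y\|_2$.

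The second step instantiates this bound with the CEAL parameters. For an uplink message in a round with norm-estimation index $j$ we have $\varepsilon=\gamma_j=\gamma_0\sigma/\sqrt{s_j}$, $r=G_j+B_j$, and $\|\hat g^{(m)}_j(x^{(k)})\|_2\le B_j+G_j$, so the message costs at most $\frac{d(B_j+G_j)\sqrt{s_j}}{\gamma_0\sigma}+\cO(d)$ bits. The key observation is that $(B_j+G_j)\sqrt{s_j}/\sigma=\cO(1)$ up to logarithmic factors: $G_j\sqrt{s_j}/\sigma=\cO(1)$ directly from the definition of $G_j$, and $B_j\sqrt{s_j}/\sigma=\cO(1)$ because $B_j=\min\{5\tau_{j-1},1\}=\cO(2^{-j})$ while $\sqrt{s_j}/\sigma=\Theta(2^{j}\sqrt{\log(Mj^2/\delta)/M})$, so the factors $2^{j}$ and $2^{-j}$ cancel. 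Hence each client transmits $\cO(d/\gamma_0)$ bits per round. Similarly, for a downlink message we have $\varepsilon=\phi_j=\phi_0\tau_j$, $r=B_j+\tau_j$, and $\|\hat g^{(\textsc{serv})}_j(x^{(k)})\|_2\le B_j+\tau_j$; since $\tau_{j-1}=2\tau_j$ and $B_j\le5\tau_{j-1}=10\tau_j$, the ratio $r/\varepsilon=(B_j+\tau_j)/(\phi_0\tau_j)\le11/\phi_0$, so the server broadcasts $\cO(d/\phi_0)$ bits per round.

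To finish, note that an uplink message is sent by every client in every round, whereas a downlink message is broadcast only in the rounds in which the server's termination test passes; both counts are at most the total number of communication rounds, which is $\cO(\log(MT))$ by Lemma~\ref{lemma:number_of_communications}. Summing the per-round costs over these rounds yields $C_{\text{u}}(T)=\cO(d\log(MT)/\gamma_0)$ and $C_{\text{d}}(T)=\cO(d\log(MT)/\phi_0)$.

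I expect the crux to be the second step, namely bounding the magnitude of the integers actually sent under unary coding. A priori the grid underlying $Q(\cdot,\varepsilon,r)$ has $\Theta(\sqrt d\,r/\varepsilon)$ points per coordinate, so a carelessly chosen index could blow the cost up far beyond $\cO(d/\gamma_0)$; the argument must exploit that unary coding pays only for the true magnitude of each quantized coordinate, that $\|y\|_1\le\sqrt d\,\|y\|_2$, and --- most importantly --- that the ratio $r/\varepsilon$ is $\Theta(1)$ up to logarithms by design. This last point is precisely what the scalings $\gamma_j\propto\sigma/\sqrt{s_j}$, $\phi_j\propto\tau_j$, $B_j\propto2^{-j}$, $\tau_j\propto2^{-j}$, $s_j\propto4^j$ were engineered to ensure, and verifying these cancellations is the technical heart of the proof. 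A minor additional point is that the $\sqrt{\log(Mj^2/\delta)}$ factor inside $G_j$ contributes a lower-order $\sqrt{d\log(\cdot)}$ term to the uplink bound; this is absorbed into $\cO(d/\gamma_0)$ in the regime $\log(M/\delta)=\cO(d)$ (or on treating $\delta$ as a fixed constant), and it does not affect the downlink estimate.
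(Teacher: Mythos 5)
Your proof is correct and follows essentially the same route as the paper: a per-message bound of $\cO(d\,r/\varepsilon)$ bits obtained from the unary cost $\sum_i|Q_i|$ via Cauchy--Schwarz, the observation that the parameter scalings force $r/\varepsilon=\cO(1/\gamma_0)$ (uplink) and $\cO(1/\phi_0)$ (downlink), and multiplication by the $\cO(\log(MT))$ round count from Lemma~\ref{lemma:number_of_communications}. The only substantive difference is cosmetic --- you bound $\sum_i|Q_i|$ coordinatewise through $\|y\|_1\le\sqrt d\,\|y\|_2$ whereas the paper bounds $\sum_iQ_i^2$ through $\|Q(y)\|\le r+\varepsilon$ --- and you are in fact more explicit than the paper about the residual $\sqrt{\log(Mj^2/\delta)/d}$ factor hiding in $G_j/\gamma_j$.
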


\begin{proof}

The main ingredient in the proof of the above theorem is the following lemma that bounds the size of any message exchanged during the algorithm.

\begin{lemma}
    If CEAL is run with parameters described in Sec.\ref{sec:algorithm}, then under the communication scheme described in Sec.~\ref{sub:communication_strategy}, the size of any message exchanged between a client and the server is bounded by $\cO(d)$ bits.
    \label{lemma:channel_capacity}
\end{lemma}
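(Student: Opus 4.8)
The plan is to bound the bit-length of each encoded message by bounding, for every coordinate, the magnitude of the integer that the unary code represents. Recall that a vector $y$ with $\|y\| \le \rho$ is quantized by splitting $[-\rho,\rho]$ into $p(\varepsilon) = \lceil 2\rho\sqrt{d}/\varepsilon \rceil$ cells; each coordinate is rounded (stochastically) to one of the endpoints $c_w = \rho(2w/p(\varepsilon) - 1)$, and then the integer index of that endpoint (shifted so the origin has a small index) is written in unary. Since unary coding of an integer $n$ uses $|n|+2$ bits (the sign bit, $|n|$ ones, and an implicit separator), the total length of the message is $\sum_{i=1}^d (|n_i| + 2) = 2d + \sum_{i=1}^d |n_i|$, where $n_i$ is the index of the $i$-th quantized coordinate. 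So it suffices to show $\sum_i |n_i| = \cO(d)$, and in fact it suffices to show each $|n_i| = \cO(1)$ with the hidden constant depending only on the fixed resolution constants $\gamma_0,\phi_0$ (not on $j$, $k$, $M$, $T$, or $d$).

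First I would handle the uplink message $Q(\hat g_j^{(m)}(x^{(k)}), \gamma_j, G_j + B_j)$. Here $\rho = G_j + B_j$ and $\varepsilon = \gamma_j = \gamma_0\sigma/\sqrt{s_j}$, so the number of cells is $p(\gamma_j) = \lceil 2(G_j+B_j)\sqrt{d}/\gamma_j \rceil$. Each coordinate of $\hat g_j^{(m)}$ lies in $[-\rho,\rho]$ by construction (or is clipped there), so its quantized index, measured from the center, has absolute value at most $p(\gamma_j)/2 + 1$; but that is $\Theta(\sqrt d)$ per coordinate, which would give $\cO(d^{3/2})$ bits — too weak. The key refinement is that the encoded integer is not the index of the endpoint in an absolute grid of width $2\rho$, but the index relative to a reference point known to both parties (the current iterate, or the previously agreed quantized value), so that what actually gets transmitted is the quantized value of a vector whose norm is $\cO(\text{noise scale})$ rather than $\cO(\rho)$. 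Concretely, the magnitude being encoded coordinatewise is the quantization of something of norm $\cO(G_j)$ at resolution $\gamma_j/\sqrt d$, i.e. roughly $\|(\text{quantity})\|_\infty \cdot \sqrt d / \gamma_j \le (\text{quantity norm}) \cdot \sqrt d/\gamma_j$. Plugging $G_j = (4\sigma/\sqrt{s_j})(1+\sqrt{\log(4Mj^2/\delta)/2d})$ and $\gamma_j = \gamma_0\sigma/\sqrt{s_j}$, the $\sqrt{s_j}$ and $\sigma$ cancel, the $\sqrt d$ in $G_j$ cancels the $\sqrt d$ from the coordinatewise split, and one is left with a per-coordinate magnitude of $\cO(1/\gamma_0)$. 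Summing over $d$ coordinates and adding the $2d$ overhead bits gives $\cO(d/\gamma_0) = \cO(d)$.

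The downlink message $Q(\hat g_j^{(\textsc{serv})}(x^{(k)}), \phi_j, B_j + \tau_j)$ is handled identically: $\rho = B_j + \tau_j$ with $B_j = \min\{5\tau_{j-1},1\} = \Theta(\tau_j)$ (using $\tau_{j-1} = 2\tau_j$), and $\varepsilon = \phi_j = \phi_0\tau_j$, so the number of cells is $\Theta(\sqrt d/\phi_0)$; again, since the server is transmitting (the quantization of) a vector whose norm is known to be $\Theta(\tau_j)$ — it passed the termination test, so $\|\hat g_j^{(\textsc{serv})}\| \ge 4\tau_j$, and one shows it is also $\cO(\tau_j)$ via the concentration bounds — the $\tau_j$'s cancel and each coordinate's encoded magnitude is $\cO(1/\phi_0)$, for a total of $\cO(d/\phi_0) = \cO(d)$ bits. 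I expect the main obstacle to be precisely the observation in the previous paragraph: one must argue carefully that the object actually fed to the unary encoder has $\ell_\infty$-norm (equivalently $\ell_2$-norm) on the order of the \emph{noise/threshold scale} $G_j$ (resp. $\tau_j$) and not on the order of the nominal quantization range $G_j + B_j$ (resp. $B_j + \tau_j$), since it is this cancellation of the scale against the resolution that removes all $\sqrt d$, $\sigma$, $s_j$ dependence and leaves only the fixed constants $\gamma_0,\phi_0$. This requires invoking the high-probability bounds on $\|\hat g_j^{(m)} - \nabla f(x^{(k)})\| \le G_j$, the bound $\|\nabla f(x^{(k)})\| \le B_j$ from the epoch structure, and the analogous server-side concentration, all of which are available from the parameter choices and the sub-Gaussian assumption; assembling them into the clean $\cO(d)$ statement is the crux, and the rest is the elementary unary-coding count above.
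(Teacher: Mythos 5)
You correctly diagnose the obstacle --- the naive per-coordinate bound gives an index of magnitude $\Theta(\sqrt{d})$, hence $\cO(d^{3/2})$ bits in total --- but the fix you propose does not work, and it is not the one the paper uses. Your plan is to show each encoded integer is $\cO(1)$, via (i) a claim that the encoding is relative to a reference point known to both parties, so that the transmitted vector has norm on the noise scale rather than $\cO(\rho)$, and (ii) a claimed cancellation of $\sqrt{d}$ factors between $G_j$ and the coordinatewise resolution $\gamma_j/\sqrt{d}$. Neither holds. The algorithm quantizes the sample-mean gradient onto the absolute grid $c_w = r(2w/p(\varepsilon)-1)$ over $[-r,r]$ with $r = G_j+B_j$; no differential encoding against a shared reference is performed, so you are inventing a mechanism the scheme does not have. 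And the arithmetic in (ii) fails: $G_j = (4\sigma/\sqrt{s_j})\bigl(1+\sqrt{\log(4Mj^2/\delta)/2d}\bigr)$ is $\Theta(\sigma/\sqrt{s_j})$ --- the $1/\sqrt{d}$ sits only in the subdominant logarithmic correction --- so $G_j\sqrt{d}/\gamma_j = (4/\gamma_0)\bigl(\sqrt{d} + \sqrt{\log(\cdot)/2}\bigr)$, which is still $\Theta(\sqrt{d})$ per coordinate. Indeed no per-coordinate $\cO(1)$ bound can hold: a vector of norm $r$ may concentrate all its mass in one coordinate, whose index is then $\Theta(r\sqrt{d}/\varepsilon) = \Theta(\sqrt{d})$ when $r/\varepsilon = \Theta(1)$.

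The missing idea is a global $\ell_2$-to-$\ell_1$ step on the vector of indices. The paper bounds $\|Q(y)\|\le r+\varepsilon$, converts this to $\sum_i Q_i^2 \le \bigl((r+\varepsilon)p(\varepsilon)/2r\bigr)^2 \le 4d(r/\varepsilon+1)^2$ in index units, and then applies Cauchy--Schwarz: $\sum_i|Q_i| \le \sqrt{d\sum_i Q_i^2} \le 2d(r/\varepsilon+1)$. This is exactly the factor-$\sqrt{d}$ saving you were looking for: individual indices may be as large as $\Theta(\sqrt{d})$, but their \emph{sum} is only $\cO(d\cdot r/\varepsilon)$, and $r/\varepsilon$ equals $(G_j+B_j)/\gamma_j$ on the uplink and $(B_j+\tau_j)/\phi_j$ on the downlink, which are $\cO(1/\gamma_0)$ and $\cO(1/\phi_0)$ respectively. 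Adding the $d$ sign bits gives the claimed $\cO(d)$ message size. Your unary-coding bookkeeping and your identification of which $r$ and $\varepsilon$ to plug in for each direction are otherwise fine.
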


From the above lemma, we know that each message transmitted both on the uplink and downlink channel is $\cO(d)$ bits. Since messages are exchanged only once during each epoch, i.e., when the epoch index $k$ is updated (See lines $7-9$ in Alg.~\ref{alg:cologne}), the total communication cost on both uplink and downlink channel is $\cO(d)$ times the number of epochs. The statement of the theorem now immediately follows by invoking Lemma~\ref{lemma:number_of_communications}.
    % the above lemma along with 
\end{proof}

% The proof of this theorem follows immediately 

Please refer to Appendix~\ref{sec:proofs} for a detailed proof of Lemma~\ref{lemma:channel_capacity}.

\section{Empirical Studies}

\begin{figure*}[!ht]
\centering
\subfloat[Synthetic Dataset]{\label{fig:synthetic_regret_plot}\centering \includegraphics[scale = 0.375]{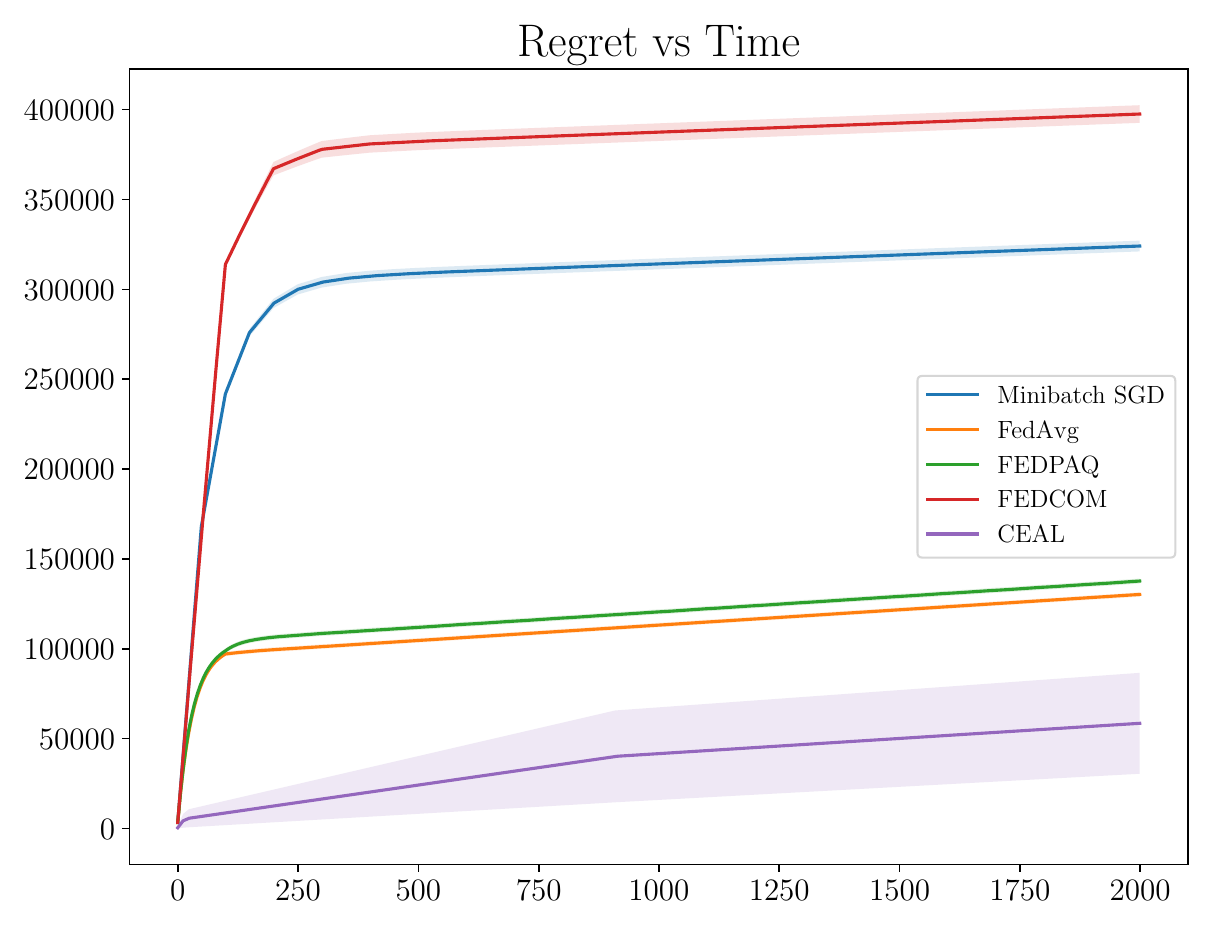}}
~
\subfloat[MNIST Dataset]{\label{fig:mnist_regret_plot}\centering \includegraphics[scale = 0.35]{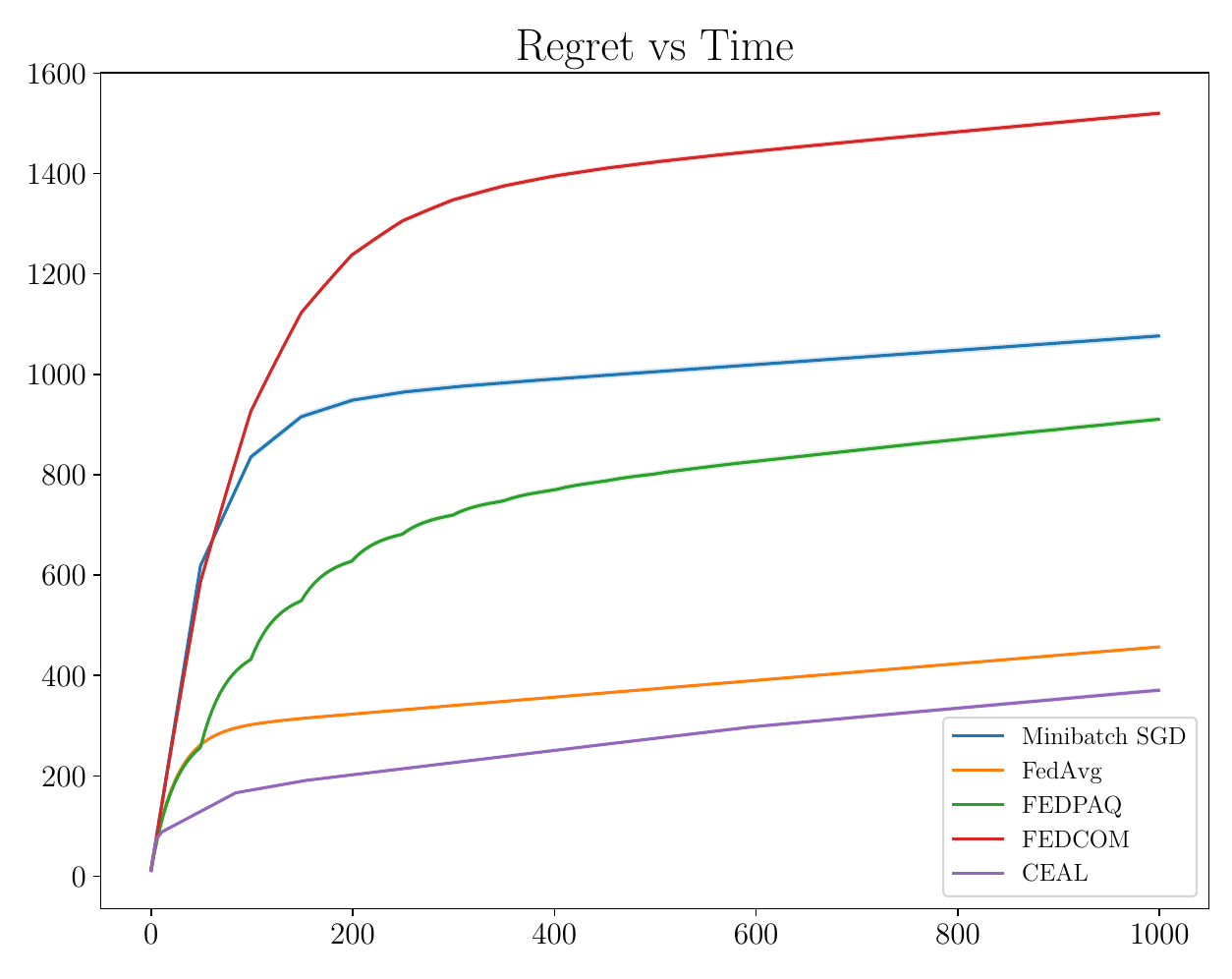}}
~
\caption{Cumulative Regret vs Time for different algorithms for on (a) Synthetic dataset and (b) MNIST dataset. The bold line represents the mean obtained over $10$ Monte Carlo runs and the shaded region represents the region of error bars corresponding to one standard deviation. The error bars for all algorithms in on MNIST are very small (approximately $\pm 10$).}
\label{fig:plots}
\end{figure*}

\begin{center}
\begin{table*}
\centering
\small
    \begin{tabular}{ccccc}
    \toprule
    & \multicolumn{2}{c}{Uplink Cost} & \multicolumn{2}{c}{Downlink Cost}  \\
    \midrule
      & Synthetic Dataset & MNIST & Synthetic Dataset & MNIST \\
      \cmidrule{2-5}
     Minibatch SGD & $38400$  & $5.02 \times 10^6$ & $38400$ & $5.02 \times 10^6$  \\ %\hline     % $5017600$
     FedAvg & $19200$ & $5.02 \times 10^6$ & $19200$ & $5.02 \times 10^6$  \\%\hline
     FedPAQ & $2440$ & $0.63 \times 10^6$ & $19200$ & $5.02 \times 10^6$ \\%\hline       % 627840
     FedCOM & $2440$ & $0.63 \times 10^6$& $19200$ & $5.02 \times 10^6$ \\%\hline
     CEAL & $263.3$ & $0.11 \times 10^6$& $288.6$ & $0.26 \times 10^6$ \\ %111843.41
     \bottomrule
    \end{tabular}
    \caption{Communication cost (in bits) for various algorithms on different datasets. Reported values are obtained after averaging over $10$ Monte Carlo runs.}
    \label{table:comm_costs}
\end{table*}
\end{center}

In this section, we provide numerical experiments comparing the performance of CEAL with that of several baselines, namely, Minibatch-SGD~\cite{Woodworth2020}, FedAvg~\cite{BrendanMcMahan2017}, FedPAQ~\cite{Reisizadeh2020fedpaq} and FedCOM~\cite{Haddadpour2021federated}.

We first describe the datasets and experimental settings which is followed by a discussion on the results.

\subsection{Datasets}

We perform empirical studies on both synthetic and real-world datasets. For the synthetic dataset, we consider the problem of linear regression where the loss (objective) function is given by $f(\theta) = \frac{1}{N}\sum_{i = 1}^N (y_i - X_i^{\top}\theta)^2$, where $\theta \in \R^{30}$. The covariates $\{X_i\}_{i = 1}^N$ are drawn from zero mean normal distribution and are normalized and scaled to have norm of $100$. The responses $\{y_i\}_{i =1}^N$ are generated as $y_i = X_i^T \theta^* + \varepsilon_i$, where $\varepsilon_i$'s are independent and identically distributed as $\cN(0,1)$ and $\theta^*$ is the true unknown set of regression coefficients and is drawn uniformly at random from the surface of the unit sphere. The number of data points are set to $N = 2000$ and are distributed uniformly across the $M = 10$ devices, with each getting $200$ data points.

We also consider the problem of regularized logistic regression on MNIST dataset. From the original training dataset of $60,000$ images, we consider a subset of $50,000$ images, with $5,000$ images corresponding to each digit. This dataset is uniformly distributed across all $M = 10$ clients, resulting in $5,000$ data points for each client. The images are normalized to ensure that the pixel values lie in $[0,1]$. We consider the standard loss function for multinomial logistic regression given by
\begin{align*}
    f(W) & = \frac{1}{N} \sum_{i = 1}^N \bigg[\sum_{k = 0}^9 \1\{Y_i = k\} X_i^T W + \\
    &~~~~~~~~~\log\left(\sum_{k = 0}^9 \exp(-X_i^T W_k) \right) \bigg] + \mu \|W\|_F^2
\end{align*}
Here $W \in R^{784 \times 10}$ is a weight matrix for classification, $X_i \in \R^{784}$ is the vectorized training data, $Y_i$ is the corresponding label, $k \in \{0,1,\dots, 9\}$ denotes the class, $N = 50,000$ is the number of data points, $\mu > 0$ is the regularization parameter and $\| \cdot \|_F$ denotes the Frobenious norm.

\subsection{Experimental Settings}

For both the synthetic and real dataset, we consider a Federated Learning problem with $M = 10$ devices connected to a central server. All the algorithms are run for a time horizon of $T = 2000$ steps for the syntethic dataset and $T = 1000$ for the MNIST dataset. The gradient is computed using a randomly chosen minibatch for each client, based on the each client's data. For the synthetic and MNIST datasets, the minibatch size is set to $1$ and $25$ respectively. The learning rate for all the algorithms is optimized using a grid search. For the synthetic dataset, the learning rate of Minibatch-SGD, FedAvg, FedPAQ, FedCOM\footnote{The global learning rate was set to $10$ in both the experiments.} and CEAL were set to $1, 0.1, 0.1, 0.002$ and $2$ respectively. Similarly, for the MNIST dataset the learning rates (in the same order) were set to $0.2, 0.01, 0.01, 0.0005$ and $0.3$ respectively. For the synthetic dataset, the number of local steps was set to $100$ for FedAvg, FedPAQ and FedCOM and $50$ for Minibatch SGD. For the real dataset, the number of local steps was set to $50$ for all algorithms. The regularization parameter was set to $0.5$.

We assume $32$ bit representation of floats in order to calculate the communication cost for algorithms that do not employ quantization. The number of quantization levels for FEDPAQ and FEDCOM is set to $3$ for synthetic dataset and $5$ for real dataset. We report the cumulative regret and the communication cost (both uplink and downlink), measured in bits for all the algorithms, averaged over $10$ Monte Carlo runs. Recall that the uplink and downlink cost were defined to the number of bits transmitted by a client (on average) to the server and the those broadcast by the server to the clients throughout the entire learning process.

\subsection{Results}

We plot the overall cumulative regret incurred by different algorithms for the experiment with synthetic dataset in Fig.~\ref{fig:synthetic_regret_plot} and that for the real dataset in Fig.~\ref{fig:mnist_regret_plot}. We tabulate the communication costs incurred by different algorithms across different experiments in Table~\ref{table:comm_costs}. As it is evident from the plots, our proposed algorithm achieves a smaller cumulative regret than the standard, commonly used algorithms for Federated Learning for both the tasks. Moreover, this improved performance in regret is achieved at a very low communication cost. Specifically, for synthetic and real dataset the uplink communication cost incurred by CEAL is about $10\%$ and $16\%$ respectively of that incurred by FedPAQ and FedCOM algorithms, both of which popular Federated Learning algorithms that employ quantization to reduce communication. This factor reduces to $1-2\%$ when compared against classical algorithms like FedAvg that do not employ quantization. CEAL also offers significant reduction in downlink costs as it incurs no more than $2\%$ of the downlink cost incurred by all other algorithms. This can significantly reduce the download costs for local training devices and consequently reduce the infrastructure requirements for local participating devices.

\section{Conclusion}
\label{sec:conclusion}

We proposed a new algorithm for distributed convex optimization called CEAL that incurs order-optimal cumulative regret of $\cO(\log(MT))$ along with a communication cost of $\cO(d \log(MT))$ bits. CEAL is characterized by its adaptive learning through its novel norm estimation routine that allows it to achieve desirable performance in terms of the more holistic measures of regret and communication cost considered in this work. An interesting future direction is to investigate the lower bounds on this holistic communication cost and compare them to the upper bounds established in this work. Another interesting direction would to extend CEAL to general convex functions.

% References should be produced using the bibtex program from suitable
% BiBTeX files (here: strings, refs, manuals). The IEEEbib.bst bibliography
% style file from IEEE produces unsorted bibliography list.
% -------------------------------------------------------------------------
\bibliographystyle{IEEEtran}
\bibliography{references}

\appendix

In Appendix~\ref{sec:proof_norm_estimation_routine}, we first analyze the performance of the Norm Estimation Routine, which is an important component in the design of CEAL.  We then use the results obtained in Appendix~\ref{sec:proof_norm_estimation_routine} to prove the main result of Theorem~\ref{theorem:regret} in Appendix~\ref{sec:proofs}.

\section{Norm Estimation Routine}
\label{sec:proof_norm_estimation_routine}

Consider the task of estimating the norm of a vector $y$ with $\|y\| \leq 1$ using the Norm Estimation Routine. For completeness, we again provide the pseudo-code for the routine in Algorithm~\ref{alg:norm_est_appendix}. Note that this description also includes the quantization step, which is assumed to be carried out using the quantization process described in Section~\ref{sub:communication_strategy}.

\begin{algorithm}
    \caption{\textsc{NormEst}}
    \label{alg:norm_est_appendix}
    \begin{algorithmic}[1]
            \STATE Set $j \leftarrow 1$
            \WHILE{\texttt{True}}
                \STATE For each client $m$, take $s_j$ samples and compute the sample mean $\hat{y}^{(m)}_j$, quantize it to $Q(\hat{y}^{(m)}_j)$ and send it to the server
                \STATE At the server, compute $\hat{y}_{j}^{(\textsc{serv})} =  \frac{1}{M} \sum_{m = 1}^M Q(\hat{y}^{(m)}_j)$
                \IF{$\tau_j \leq  \|\hat{y}_{j}^{(\textsc{serv})}\|_2/4 $}
                \STATE Server sends $\hat{y}_{j}^{(\textsc{serv})}$ to all clients
                \STATE \textbf{break}
                \ELSE
                \STATE $j \leftarrow j + 1$
                \ENDIF
            \ENDWHILE
    \end{algorithmic}
\end{algorithm}

The estimates obtained during the Norm estimation phase satisfy the following lemma.

\begin{lemma}
For any epoch $j$ during the Norm Estimation Routine, the estimate at the server, $\hat{y}_{j}^{(\textsc{serv})}$, satisfies,
\begin{align*}
    \|\hat{y}_{j}^{(\textsc{serv})} - y\| \leq  \tau_j,
\end{align*}
with probability at least $1 - \delta$.
\label{lemma:estimate_err}
\end{lemma}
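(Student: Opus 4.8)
The plan is to bound $\|\hat{y}_j^{(\textsc{serv})} - y\|$ by a triangle-inequality decomposition into a quantization term and a sampling-noise term, control each by a high-probability concentration bound, and then verify that the sum is at most $\tau_j$ for the stated parameter choices. Writing $\bar{y}^{(m)}_j := \frac{1}{s_j}\sum z$ for the (unquantized) sample mean at client $m$, decompose
\begin{align}
    \hat{y}_j^{(\textsc{serv})} - y = \underbrace{\frac{1}{M}\sum_{m=1}^M \left( Q(\bar{y}^{(m)}_j) - \bar{y}^{(m)}_j \right)}_{\text{(I): quantization}} + \underbrace{\frac{1}{M}\sum_{m=1}^M \left( \bar{y}^{(m)}_j - y \right)}_{\text{(II): sampling noise}}. \nonumber
\end{align}

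For term (II): each coordinate of $\bar{y}^{(m)}_j - y$ is an average of $s_j$ zero-mean $(\sigma^2/d)$-sub-Gaussian summands, so $\bar{y}^{(m)}_j - y$ is $(\sigma^2/(d s_j))$-sub-Gaussian per coordinate; averaging over $M$ independent clients gives a $(\sigma^2/(d M s_j))$-sub-Gaussian vector. I would then apply a standard sub-Gaussian norm concentration bound (e.g. the one implicit in the definition of $G_j$ in Sec.~\ref{sec:algorithm}, namely $\|v\|\le \frac{2\sigma}{\sqrt{n}}\bigl(1+\sqrt{\log(1/\delta')/2d}\bigr)$ with probability $1-\delta'$ for an $(\sigma^2/(dn))$-sub-Gaussian vector) with $n = M s_j$ and $\delta' = \delta/(2Mj^2)$ or similar, so that a union bound over all $j$ costs only a constant factor. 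This yields $\|\text{(II)}\| \le \frac{2\sigma}{\sqrt{M s_j}}\bigl(1+\sqrt{\log(8Mj^2/\delta)/2d}\bigr)$ with the required probability, and plugging in $s_j = \lceil 40\sigma^2\log(16Mj^2/\delta)\,4^j/M\rceil$ makes this $\le \tau_j/2 = 3\cdot 2^{-(j+2)}$ after arithmetic (the $4^j$ in $s_j$ is precisely tuned so that $\sigma/\sqrt{M s_j} \asymp 2^{-j}$).

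For term (I): the stochastic quantizer $Q(\cdot,\gamma_j,\cdot)$ is unbiased and, per coordinate, has error bounded by the bin width $\gamma_j/\sqrt{d}$ deterministically and is mean-zero, so each coordinate of $Q(\bar{y}^{(m)}_j)-\bar{y}^{(m)}_j$ is a bounded (hence $(\gamma_j^2/d)$-sub-Gaussian, up to constants) zero-mean variable; the coordinates are independent given $\bar{y}^{(m)}_j$, and the clients are independent. So again $\text{(I)}$ is an average over $M$ vectors each with sub-Gaussian parameter $\asymp \gamma_j^2/d$ per coordinate, giving $\|\text{(I)}\| \lesssim \frac{\gamma_j}{\sqrt{M}}\bigl(1+\sqrt{\log(1/\delta')/2d}\bigr)$ with high probability; since $\gamma_j = \gamma_0\sigma/\sqrt{s_j}$ with $\gamma_0 \in (0,1)$, this is dominated by (and of the same order as) the bound on term (II), so it too is $\le \tau_j/4$, say, for $\gamma_0$ small enough. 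Adding the two bounds gives $\|\hat{y}_j^{(\textsc{serv})}-y\|\le \tau_j$, with the union bound over epochs $j$ keeping the total failure probability at $\delta$.

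The main obstacle I anticipate is purely bookkeeping on the constants: one has to chase the exact constants ($40$, $16$, the factors of $\sqrt{2d}$) through both concentration bounds simultaneously and confirm that $\frac{2\sigma}{\sqrt{M s_j}}(1+\sqrt{\log(16Mj^2/\delta)/2d}) + (\text{quantization term}) \le 3\cdot 2^{-(j+1)} = \tau_j$ holds for every $j\ge 1$, using only $\gamma_0,\phi_0<1$ and $\|y\|\le 1$ (the latter ensures $\bar y^{(m)}_j$ lies in the quantization range $[-(G_j+B_j),G_j+B_j]$ with high probability, which must be checked on the same good event so the deterministic per-coordinate quantization bound applies). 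A secondary subtlety is making sure the good events for the sampling noise and for the "mean is in range" claim are nested correctly so the quantization analysis is valid — but conditioning term (I)'s concentration on the event from term (II) handles this cleanly.
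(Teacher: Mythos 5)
Your proposal matches the paper's proof essentially step for step: the same triangle-inequality split into a sampling-noise term and a quantization-noise term, the same sub-Gaussian norm concentration applied to each (with the quantization noise treated as bounded, zero-mean, hence sub-Gaussian with variance proxy of order $\gamma_j^2/d$ per coordinate), and the same union bound over epochs $j$ with per-epoch failure probability scaling like $\delta/j^2$. The remaining work you flag is exactly the constant bookkeeping the paper carries out to land on $2^{-j} + 2^{-(j+1)} = \tau_j$.
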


\begin{proof}
Let $\eta_j^{(m)} \in \R^d$ denote the quantization noise added by $m^{\text{th}}$ client during the $j^{\text{th}}$ epoch, i.e., the quantized version received by the server can be written as $Q(\hat{y}^{(m)}_j) = \hat{y}^{(m)}_j + \eta_j^{(j)}$. Since each coordinate is quantized independently, each coordinate of $\eta_j^{(m)}$ is an independent zero mean sub-Gaussian random variable with variance proxy $\gamma_j^2/4d$. At the end of the $j^{\text{th}}$ epoch, we have

\begin{align*}
    \|\hat{y}_{j}^{(\textsc{serv})} - y\| & = \left\|\frac{1}{M} \sum_{m = 1}^M Q(\hat{y}^{(m)}_j) - y\right\| \\
    & = \left\|   \frac{1}{M} \sum_{m = 1}^M (\hat{y}^{(m)}_j + \eta_j^{(m)}) - y \right\| \\
    & \leq \left\|  \frac{1}{M} \sum_{m = 1}^M\hat{y}^{(m)}_j  - y \right\|   + \left\| \frac{1}{M} \sum_{m = 1}^M \eta_j^{(m)}  \right\|.
    % & = \left\|  \frac{1}{M} \sum_{j = 1}^M \hat{\bfxi}^{(j)}_k \right\|   + \left\| \frac{1}{M} \sum_{j = 1}^M \eta_k^{(j)}  \right\|.
\end{align*}

Let $\mathcal{E}_j$ denote the event where the observations satisfy the following two inequalities:
\begin{align*}
    \left\| \frac{1}{M} \sum_{m = 1}^M \eta_j^{(m)}  \right\| & \leq \frac{2\gamma_j}{\sqrt{M}}\left(1 + \sqrt{\frac{1}{2d}\log\left(\frac{4j^2}{\delta}\right)} \right) \\
    \left\|  \frac{1}{M} \sum_{m = 1}^M\hat{y}^{(m)}_j - y \right\|   & \leq \frac{4\sigma}{\sqrt{Ms_j}}\left(1 +  \sqrt{\frac{1}{2d}\log\left(\frac{4j^2}{\delta}\right)} \right).
\end{align*}
Similarly, define $\mathcal{E} = \cap_{j \geq 1} \mathcal{E}_j$. Using the concentration of sub-Gaussian random vectors~\cite{Jin2019, Salgia2022Bandits}, we obtain that $\mathcal{E}_j$ holds with probability at least $1 - \delta/(2j^2)$. Consequently, an application of the union bound yields $\Pr(\mathcal{E}) \geq 1 - \delta \sum_{j = 1}^{\infty}(2j^2)^{-1} \geq 1 - \delta$.

On plugging the values of $s_j$ and $\gamma_j$ and conditioning on the event $\mathcal{E}$, we obtain that
\begin{align*}
    \|\hat{y}_{j}^{(\textsc{serv})} - y\| \leq {2^{-j}} + {2^{-(j+1)}}  = 3 \cdot 2^{-(j+1)} = \tau_j,
\end{align*}
holds for all $j \geq 1$.

\end{proof}

\section{Proofs of Helper Lemmas}
\label{sec:proofs}

We begin by proving Lemmas~\ref{lemma:t_k} and~\ref{lemma:number_of_communications} which we then use to establish the result in Theorem~\ref{theorem:regret}. 

\subsection{Proof of Lemma~\ref{lemma:t_k}}
\label{proof:lemma_t_k}

Consider the $k^{\text{th}}$ iterate, $x^{(k)}$. Let $g(x^{(k)}) := \nabla f(x^{(k)})$ denote the true gradient at $x^{(k)}$. Recall that the $\hat{g}_j^{(\textsc{serv})}(x^{(k)})$ denotes the estimate of the gradient at the server at the end of the $j^{\text{th}}$ epoch. For brevity of notation, we drop the argument $x^{(k)}$ throughout this proof.

Under the event $\cE$ as defined in Proof of Lemma~\ref{lemma:estimate_err}, the Norm Estimation Routine at this point will terminate at the end of epoch $j_0$, where $j_0 := \min \{ j \in \N: 4\tau_j \leq \| \hat{g}_{j}^{(\textsc{serv})} \| \}$. We note that for all $j$ for which the inequality $4\tau_j \leq \| \hat{g}_{j}^{(\textsc{serv})}\|$ holds, we also have the relation
\begin{align*}
     \|\hat{g}_{j}^{(\textsc{serv})} - g\| &\leq \tau_j \leq \frac{1}{4} \cdot \| \hat{g}_{j}^{(\textsc{serv})} \|  \\
     \implies \|g\| &\in \left[ \frac{3}{4} \| \hat{g}_{j}^{(\textsc{serv})} \|, \frac{5}{4} \| \hat{g}_{j}^{(\textsc{serv})} \| \right] \\
     \implies \| \hat{g}_{j}^{(\textsc{serv})} \| &\in \left[ \frac{4}{5} \|g\|, \frac{4}{3} \|g \| \right].
\end{align*}
Consequently, we also have $\tau_j \leq \frac{1}{3} \cdot \| g \|$ which implies that $j \geq \log_2(9/2\|g\|)$. Since $j_0$ is the smallest natural number satisfying this relation, $\displaystyle j_0 = \lceil\log_2(9/2\|g\|) \rceil$.

Thus, under the event $\cE$, the Norm Estimation Routine terminates at the end of epoch $j_0$. Consequently, we can bound $t_k$ as
\begin{align*}
    t_k & \leq \sum_{j = 1}^{j_0} s_j \leq \sum_{j = 1}^{j_0} \left[ \frac{40\sigma^2}{M} \cdot \log(16Mj^2/\delta) \cdot 4^j + 1 \right] \\
    & \leq \frac{160\sigma^2}{3M} \cdot \log(16Mj_0^2/\delta) \cdot 4^{j_0} + {j_0}.
\end{align*}
On plugging in the value of $j_0$, we obtain that $t_k$ satisfies $\cO \left( \frac{1}{M \|\nabla f(x^{(k)})\|^2} \right)$. 

Similarly, we also can obtain a lower bound on $t_k$. Note that $t_k \geq s_{j_0}$. Substituting the expression for $s_j$ and the value of $j_0$ yields us that $t_k$ satisfies $\Omega\left(\frac{1}{M \|\nabla f(x^{(k)})\|^2}\right)$.

\subsection{Proof of Lemma~\ref{lemma:number_of_communications}}
\label{proof:lemma_number_of_communications}

To establish an upper bound on the number of epochs (and communication rounds), we first establish a convergence rate of the iterates. In particular, we show that conditioned on the event $\cE$, the magnitude of the gradient across iterates decreases exponentially fast, resulting in a logarithmic number of communication rounds. We use the same notation as described in Appendix.~\ref{proof:lemma_t_k}. In addition, we use $\hat{[g]}_j^{(\textsc{serv})}(x^{(k)})$ to denote the quantized version of $\hat{g}_j^{(\textsc{serv})}(x^{(k)})$, i.e., $Q(\hat{g}^{(\textsc{serv})}_j(x^{(k)}), \phi_j, B_j + \tau_j)$, which is sent to the clients. Once again, for brevity of notation, we drop the argument $x^{(k)}$ from the gradient terms throughout this proof.

The quantization strategy used in CEAL guarantees that $\|\hat{g}_j^{(\textsc{serv})} - \hat{[g]}_j^{(\textsc{serv})}\| \leq \tau_j$. Consequently, $\|\hat{[g]}_j^{(\textsc{serv})} - g\| \leq 2\tau_j \leq \frac{2}{3} \cdot \| g \|$ which leads to the bounds $\ip{g}{ \hat{[g]}_j^{(\textsc{serv})}} \geq \|g\|^2/3$ and $\|\hat{[g]}_j^{(\textsc{serv})}\| \leq 5\| g \|/3$. Using the $\beta$-smoothness of $f$, we have,
\begin{align*}
    f(x^{(k+1)}) & = f(x^{(k)} - \eta \hat{[g]}_j^{(\textsc{serv})}(x^{(k)})) \\
    & \leq f(x^{(k)}) - \eta \ip{g}{\hat{[g]}_j^{(\textsc{serv})}} + \frac{\eta^2 \beta}{2} \| \hat{[g]}_j^{(\textsc{serv})} \|^2 \\
    & \leq f(x^{(k)}) - \frac{\eta}{3} \|\nabla f(x^{(k)})\|^2  + \frac{25\eta^2 \beta}{18} \|\nabla f(x^{(k)})\|^2 \\
    & \leq f(x^{(k)}) - \frac{\eta}{3} \|\nabla f(x^{(k)})\|^2  + \frac{5\eta}{18} \|\nabla f(x^{(k)})\|^2 \\
    & \leq f(x^{(k)}) - \frac{\eta}{18} \|\nabla f(x^{(k)})\|^2 ,
\end{align*}
where the fourth step uses the relation $\eta \leq 1/(5\beta)$. Using the $\alpha$-strong convexity of $f$ we obtain,
\begin{align*}
    f(x^{(k+1)}) - f(x^*) \leq \left( 1- \frac{\alpha \eta}{9} \right) (f(x^{(k)}) - f(x^*)).
\end{align*}
Consequently, the sub-optimality gap after $k$ iterates is given by
\begin{align*}
    f(x^{(k)}) - f(x^*) \leq (1 - \alpha \eta/9)^{k-1} (f(x^{(1)}) - f(x^*)).
\end{align*}
Using $\beta$-smoothness of $f$, we can translate this bound to a bound on the magnitude of the gradient, i.e., $\|\nabla f(x^{(k)})\|^2 
 \leq C (1 - \alpha \eta/9)^{k-1}$ for some constant $C > 0$. 
 
To obtain the bound on the number of epochs, note that the lower bound on $t_k$ obtained in Sec.~\ref{proof:lemma_t_k} suggests that if the inequality $\|\nabla f(x^{(K)})\|^2 \leq C'/MT$ holds for some iterate $x^{(K)}$ and a constant $C' > 0$, independent of $M$ and $T$, then $t_K > T$, implying the algorithm terminates by epoch $K$. Using the exponential convergence of the gradient magnitude across iterates, we can conclude that such an epoch index $K$ satisfies $\cO(\log(MT))$. 

\subsection{Proof of Lemma~\ref{lemma:channel_capacity}}
\label{proof:lemma_channel_capacity}

Let $y \in \R^d$ be a vector with $\|y\| \leq r$ and $Q(y) = (Q_1, Q_2, \dots, Q_d)$ denote its quantized version up to a precision of $\varepsilon$, as carried out in CEAL. Firstly, we have $\|y - Q(y) \| \leq \varepsilon \implies \|Q(y)\| \leq r + \varepsilon$. From the definition of $Q(y)$, we have,
\begin{align*}
    \sum_{i = 1}^d Q_i^2 \left( \frac{2r}{p(\varepsilon)}\right)^2 & \leq (r + \varepsilon)^2 \\
    \implies \sum_{i = 1}^d Q_i^2 & \leq \left(\frac{(r + \varepsilon) p(\varepsilon)}{2r} \right)^2 \\
    & \leq 4d \left(\frac{r}{\varepsilon} + 1 \right)^2.
\end{align*}
Recall that $p(\varepsilon) = \lceil 2r \sqrt{d}/\varepsilon \rceil$ denotes the number of intervals along each coordinate in the quantization process. Consequently, 
\begin{align*}
    \sum_{i = 1}^d |Q_i| \leq \sqrt{d\sum_{i = 1}^d Q_i^2} \leq 2d \left(\frac{r}{\varepsilon} + 1 \right).
\end{align*}

It can be noted that under the encoding scheme used in CEAL, the message size in bits in CEAL is given by $d + \sum_{i = 1}^d |Q_i|$, where $d$ is added to account for the sign bit of each coordinate. Consequently, the message size is bounded by $d (1 + 2(r/\varepsilon + 1))$. The message size for both uplink and downlink communication is obtained by plugging in the appropriate value $r$ and $\varepsilon$.

Let us first consider the uplink communication. In epoch $j$, $r$ corresponds to $G_j + B_j$ and $\varepsilon$ to $\gamma_j$. On plugging in the prescribed values of the above parameters, we obtain that their ratio is $C/\gamma_0$, where $C$ is a constant independent of $d, M$ and $T$, resulting in a message size of $\cO(d)$ bits. Similarly, for the downlink communication, the ratio $(B_j + \tau_j)/\phi_j \leq C'/\phi_0$ is also a constant leading to a message size of $\cO(d)$ bits. The constants $\gamma_0$ and $\phi_0$ appearing in the denominator in the statement of Theorem~\ref{theorem:comm_cost} are a direct consequence of leading constant obtained above.

\end{document}